\documentclass[11pt]{article}

\usepackage[preprint]{acl}

\usepackage{times}
\usepackage{latexsym}
\usepackage[T1]{fontenc}
\usepackage[utf8]{inputenc}\usepackage{microtype}
\usepackage{inconsolata}
\usepackage{graphicx}
\usepackage{amsmath,amssymb}
\usepackage{amsthm}
\newtheorem{proposition}{Proposition}
\usepackage{booktabs}\usepackage{tabularx}\usepackage{colortbl}
\usepackage{multirow}
\usepackage{pgfplots}
\pgfplotsset{compat=1.18}
\usetikzlibrary{svg.path}
\newcommand{\openailogo}{\scalebox{0.0195}{\begin{tikzpicture}[yscale=-1]\fill[even odd rule] svg "M196.4 185.8l0-48.6c0-4.1 1.5-7.2 5.1-9.2l97.8-56.3c13.3-7.7 29.2-11.3 45.6-11.3 61.4 0 100.4 47.6 100.4 98.3 0 3.6 0 7.7-.5 11.8L343.3 111.1c-6.1-3.6-12.3-3.6-18.4 0L196.4 185.8zM424.7 375.2l0-116.2c0-7.2-3.1-12.3-9.2-15.9L287 168.4 329 144.3c3.6-2 6.7-2 10.2 0L437 200.7c28.2 16.4 47.1 51.2 47.1 85 0 38.9-23 74.8-59.4 89.6l0 0zM166.2 272.8l-42-24.6c-3.6-2-5.1-5.1-5.1-9.2l0-112.6c0-54.8 42-96.3 98.8-96.3 21.5 0 41.5 7.2 58.4 20L175.4 108.5c-6.1 3.6-9.2 8.7-9.2 15.9l0 148.5 0 0zm90.4 52.2l-60.2-33.8 0-71.7 60.2-33.8 60.2 33.8 0 71.7-60.2 33.8zm38.7 155.7c-21.5 0-41.5-7.2-58.4-20l100.9-58.4c6.1-3.6 9.2-8.7 9.2-15.9l0-148.5 42.5 24.6c3.6 2 5.1 5.1 5.1 9.2l0 112.6c0 54.8-42.5 96.3-99.3 96.3l0 0zM173.8 366.5L76.1 310.2c-28.2-16.4-47.1-51.2-47.1-85 0-39.4 23.6-74.8 59.9-89.6l0 116.7c0 7.2 3.1 12.3 9.2 15.9l128 74.2-42 24.1c-3.6 2-6.7 2-10.2 0zm-5.6 84c-57.9 0-100.4-43.5-100.4-97.3 0-4.1 .5-8.2 1-12.3l100.9 58.4c6.1 3.6 12.3 3.6 18.4 0l128.5-74.2 0 48.6c0 4.1-1.5 7.2-5.1 9.2l-97.8 56.3c-13.3 7.7-29.2 11.3-45.6 11.3l0 0zm127 60.9c62 0 113.7-44 125.4-102.4 57.3-14.9 94.2-68.6 94.2-123.4 0-35.8-15.4-70.7-43-95.7 2.6-10.8 4.1-21.5 4.1-32.3 0-73.2-59.4-128-128-128-13.8 0-27.1 2-40.4 6.7-23-22.5-54.8-36.9-89.6-36.9-62 0-113.7 44-125.4 102.4-57.3 14.8-94.2 68.6-94.2 123.4 0 35.8 15.4 70.7 43 95.7-2.6 10.8-4.1 21.5-4.1 32.3 0 73.2 59.4 128 128 128 13.8 0 27.1-2 40.4-6.7 23 22.5 54.8 36.9 89.6 36.9z";\end{tikzpicture}}}
\usepackage{listings}
\lstdefinestyle{prompt}{
  basicstyle=\ttfamily\scriptsize,
  breaklines=true,
  breakatwhitespace=true,
  breakautoindent=false,
  breakindent=0pt,
  columns=fullflexible,
  keepspaces=true,
  showstringspaces=false,
  frame=single,
  framerule=0.3pt,
  xleftmargin=0.5em,
  xrightmargin=0.5em,
  aboveskip=0.5\baselineskip,
  belowskip=0.5\baselineskip
}

\definecolor{DecisionTop}{HTML}{3569A8}
\definecolor{DecisionOther}{HTML}{9FB4CC}
\definecolor{DecisionOutside}{HTML}{D9D9D9}
\definecolor{OursColor}{HTML}{3569A8}
\definecolor{ReActColor}{HTML}{D88743}

\title{Self-Improving Agents with World Model Predictive Paths}

\author{
  Haori Lu\thanks{\ Corresponding author.} \quad Tianyu Gao \quad Yaoyao Liu \quad Yingying Li \\
  University of Illinois Urbana-Champaign \\
  \texttt{\{haorilu2, tianyu30, lyy, yl101\}@illinois.edu}
}

\begin{document}
\maketitle

\begin{abstract}
Large language model (LLM) agents are increasingly expected to interact with environments and improve their decisions during deployment.
However, how anticipating future outcomes and drawing on accumulated interaction experience can jointly support decision-making remains underexplored.
We propose a test-time framework, inspired by Model Predictive Path Integral (MPPI) control, that combines multi-branch predictive analysis with experience-based adaptation.
At each step, the agent proposes multiple candidate actions and imagines their outcomes with a world model, so that alternatives can be compared without being executed in the environment.
Since language environments provide neither explicit dynamics nor costs, the predicted outcomes are evaluated for task progress and risk, and MPPI-style weights over the resulting costs guide the final decision.
A dual-branch experience memory enables self-improvement: the planner branch refines candidate generation and action selection, while the world model branch improves next-observation prediction.
Experiments on ALFWorld and ScienceWorld across three backbones demonstrate the strong overall performance of our method and the effectiveness of experience-guided predictive control. Our code will be made publicly available.
\end{abstract}

\section{Introduction}

Large language model (LLM) agents interact with environments by interpreting observations and taking actions to accomplish tasks.
Long-horizon tasks require them to sustain progress under incomplete observations, where poor decisions can lead to repeated failures or wasted interactions.
Building agents that make reliable decisions over extended interactions therefore remains a central challenge.

Prior work improves interactive decision making in several complementary ways.
ReAct interleaves reasoning with environment feedback~\citep{yao2023react}, but commits to a single proposed action at each step, which may miss useful alternatives.
Search-based methods instead explore multiple reasoning or action paths at inference time~\citep{yao2023tot,zhou2023lats,hao2023rap,liu2026imagine}, but their cost grows with the breadth and depth of the search.
A separate line improves agents through retrieved reflections and trajectory-derived action advantages~\citep{shinn2023reflexion,jitrl2026}.
These directions motivate exploring a few alternatives before acting, without executing them, together with experience gathered through interaction.
This also resembles how people mentally simulate a few alternatives before choosing~\citep{gilbert2007prospection,vul2014one}: the agent imagines the outcomes of several candidate actions~\citep{hafner2020dreamer,gu2025webdreamer}, which avoids the interaction steps and potential irreversible consequences of trying them in the environment.
Model Predictive Path Integral (MPPI) control offers a concrete recipe for this pattern: it samples a finite batch of candidate trajectories, weights them by predicted costs with exponential (Gibbs) weights, executes only the next action, and replans~\citep{williams2015mppi,williams2017information}.

Language environments, however, provide neither explicit dynamics nor costs.
A language world model can supply the missing dynamics by predicting the outcome of each candidate action~\citep{hao2023rap,liu2026imagine,zhang2026worldevolver}, but better next-observation prediction does not necessarily yield higher task success~\citep{zhang2026worldevolver}.
We therefore couple prediction directly with decision making: the agent predicts the outcomes of several candidate actions, an evaluator assigns costs for task progress and risk, and MPPI-style weights summarize these costs for a final controller that selects or synthesizes the action to execute.

To make both the candidates and their predictions more reliable over time, we further introduce a dual-branch experience memory for self-improvement.
The planner branch uses accumulated experience to improve candidate generation and action selection, while the world model branch improves next-observation prediction, so that a single stream of interaction experience improves both what is proposed and what is predicted.
Both branches operate through retrieved textual context, with policy and world model parameters frozen.
Our analysis bounds the agent’s per-step regret in terms of three error sources, namely limited candidate coverage, errors in action selection, and inaccurate next-observation predictions, which motivates the two branches.

Experiments on ALFWorld~\citep{shridhar2021alfworld} and ScienceWorld~\citep{wang2022scienceworld} demonstrate the strongest overall success-rate performance among the evaluated methods across three backbones.
Our main contributions are:
\begin{itemize}
    \item We propose a test-time, MPPI-inspired framework that improves a bounded predict--select--replan decision loop with experience: a world model predicts the outcomes of a few candidate actions, and a dual-branch memory improves both candidate generation and selection and next-observation prediction.
    \item We provide a theoretical analysis showing that the weighting rule is optimal under a KL-regularized objective, and that the per-step regret is bounded in terms of candidate coverage, action selection, and next-observation prediction errors, which motivates the two memory branches.
    \item Experiments on ALFWorld and ScienceWorld across three backbones demonstrate the strongest overall success rate performance among baselines. We also provide ablation studies and a computational cost analysis.
\end{itemize}

\section{Related Work}

\paragraph{LLM agents.}
LLM agents choose actions from textual or visual observations~\citep{wu2025number,wu2025kris} in a loop.
ReAct~\citep{yao2023react} and SayCan~\citep{ahn2022saycan} ground actions in environmental feedback or available skills, ADaPT~\citep{prasad2024adapt} decomposes tasks, and further methods support lookahead through branching or retrieved information~\citep{yao2023tot,zhou2023lats,koh2024tree,holt2026lwm}.
In traditional control, a related perspective also uses lookahead: model-predictive control~\citep{camacho2007mpc} plans over a finite horizon, executes only the first action, and replans from the new observation.
MPPI further weights sampled trajectories by their predicted costs~\citep{williams2015mppi,williams2017information}.
We adapt MPPI’s strategy of predicting and weighting multiple candidates to language agents through world model prediction, replacing its continuous control update with a discrete action update driven by language models, and thus call our framework MPPI-inspired.

\paragraph{World models for agent planning.}
World models can estimate how actions change an environment before they are executed.
Dynalang~\citep{lin2024dynalang}, RAP~\citep{hao2023rap}, WebDreamer~\citep{gu2025webdreamer}, and IMPLEMENT~\citep{song2026implement} simulate action outcomes with language-based or lightweight world models; while ITP~\citep{liu2026imagine}, WKM~\citep{qiao2024wkm}, and WorldEvolver~\citep{zhang2026worldevolver} add adaptive lookahead, structured knowledge, or retrieved experience.
Qian et al.~\citep{qian2026current} and Li et al.~\citep{li2026word} study how well agents use forecasts and whether LLMs can serve as text-based world models.
More recently, MemWM~\citep{wang2026memwm} trains a world model with offline-built memory to evaluate the candidates.
Unlike them, we keep all models frozen and use experience gathered during interaction in two branches that improve next-observation prediction and candidate generation and selection.

\paragraph{Self-improving agents.}
Experience-based methods improve later decisions by retaining information from prior interactions, building on memory systems~\citep{park2023generative,packer2023memgpt,chhikara2025mem0,xu2025amem,sumers2024coala,hu2025memory, gao2025selfevolving,liu2023online,liu2024wakening,lu2026rose}.
Reflexion~\citep{shinn2023reflexion}, ExpeL~\citep{zhao2023expel}, ERL~\citep{allard2026erl}, and ReasoningBank~\citep{ouyang2026reasoningbank} distill feedback or trajectories into experience, Voyager~\citep{wang2023voyager} and AWM~\citep{wang2024awm} retain skills or task procedures, AdaPlanner~\citep{sun2023adaplanner} and JitRL~\citep{jitrl2026} revise plans or action preferences from retrieved experience, and Memento~\citep{zhou2025memento}, ACE~\citep{zhang2025ace}, and Dynamic Cheatsheet~\citep{suzgun2026dynamic} adapt through episodic memory or evolving contexts instead of fine-tuning~\citep{wu2026generalization}.
These methods improve what the agent proposes; our dual-branch memory also improves what it predicts, so a single stream of interaction experience sharpens both the candidates and the forecasts used to choose among them.

\begin{figure*}[t]
\centering
\includegraphics[width=0.99\textwidth]{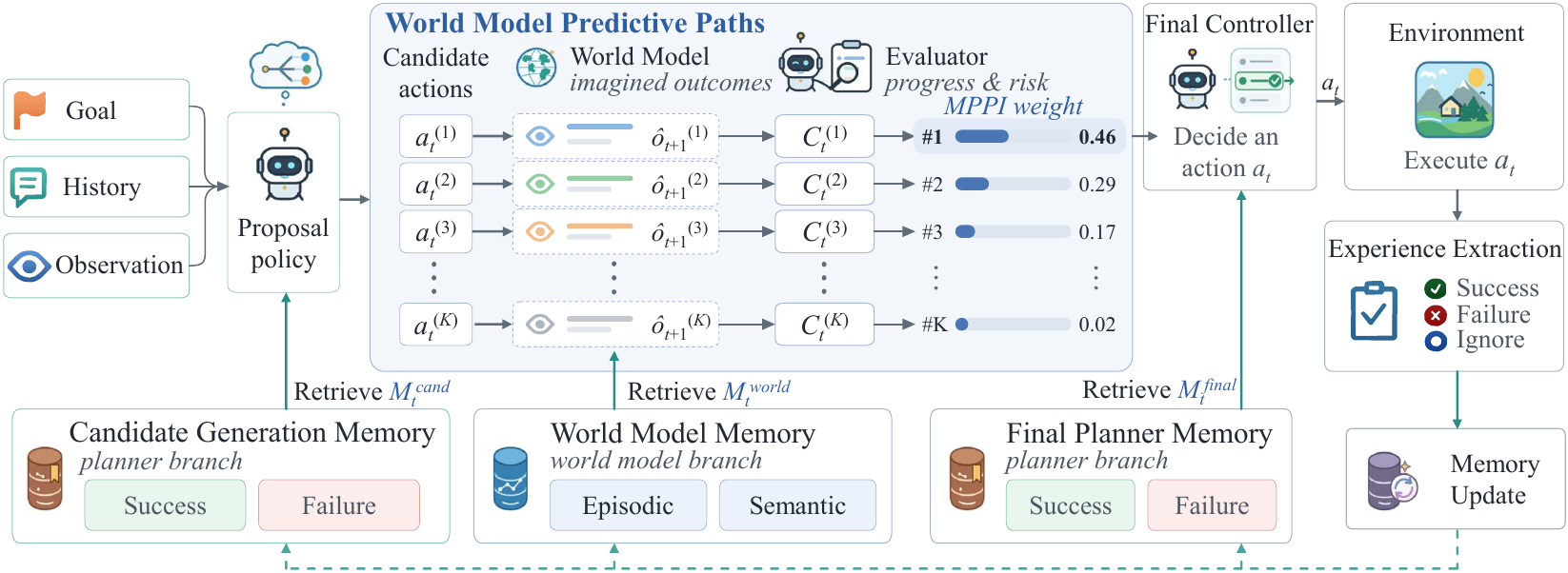}
\caption{Overview of our method. At step $t$, the proposal policy takes the goal, history, and observation and proposes $K$ candidate actions $a_t^{(k)}$. The world model predicts the next observation $\hat o_{t+1}^{(k)}$ of each candidate, the evaluator scores task progress and risk to give costs $C_t^{(k)}$, and MPPI-style weights summarize these costs for the final controller, which decides the executed action $a_t$. After execution, the transition updates both experience branches. The planner branch labels it as success, failure, or ignore and summarizes the first two into $M_t^{\mathrm{cand}}$ and $M_t^{\mathrm{final}}$, retrieved by the proposal policy and final controller in future steps; the world model branch updates its episodic and semantic memory $M_t^{\mathrm{world}}$, retrieved by the world model in future steps.}
\label{fig:loop}
\end{figure*}

\section{Problem Formulation}

We consider a language agent solving a long-horizon interactive task with goal $g$ and an interaction budget of $T$ steps.
At step $t$, the agent receives an observation $o_t$ and has interaction history $h_t=(o_1,a_1,\ldots,o_{t-1},a_{t-1})$.
It acts from context $x_t=(g,h_t,o_t)$.   Its policy $\pi(\cdot\mid x_t)$ defines a distribution over language actions, and the action at step $t$ is generated according to:
\begin{equation}\label{equ: def of pi}
a_t\sim\pi(\cdot\mid x_t),\qquad a_t\in\mathcal{A}.
\end{equation}

After $a_t$ is applied, the environment generates the next observation according to:
\begin{equation}
o_{t+1}\sim P_{\mathrm{env}}(o'\mid h_t,o_t,a_t).
\end{equation}

Our objective is to maximize the probability of completing the task within the interaction budget:
\begin{equation}
\pi^\star=\arg\max_{\pi}\Pr(\text{success by }T\mid g,\pi).
\end{equation}

In control, Model-Predictive Control (MPC) plans over a finite horizon $H$ by selecting the candidate action sequence $\mathbf{a}=(a_t,\ldots,a_{t+H-1})$ with the lowest expected predicted cost:
\begin{equation}
\mathbf{a}^{\star}=\arg\min_{\mathbf{a}}\bar{J}_t(\mathbf{a}),
\end{equation}
where $\bar{J}_t$ is the expected cost of the model-predicted trajectory~\citep{camacho2007mpc,rawlings2017model,li2019online,li2020online,zhang2021regret,li2021online}.
MPC executes only the first action and replans after the next observation.
Model Predictive Path Integral (MPPI)~\citep{williams2015mppi,williams2017information} approximates this search by sampling candidate trajectories and favoring those with lower predicted costs with Gibbs weights.
This predict-and-replan cycle suits our agent tasks, where each action shapes later decisions.
Inspired by MPPI, we develop a test-time framework for language agents that uses a world model to evaluate multiple candidate actions and guide decision-making.
The next section presents our method in detail.

\section{Method}

We first describe the test-time loop: a world model predicts outcomes for up to $K$ proposed actions, and MPPI-inspired weighting of task-conditioned costs guides a final controller to select or synthesize an action before replanning from the new observation.
We then describe how dual-branch experience memory improves planning and grounds future forecasts.

\subsection{MPPI-Inspired Decision Loop}

At time step $t$, the agent receives the goal, interaction history, current observation, and action protocol.
Let $M_t^{\mathrm{P}}$ denote the planner experiences retrieved for this decision, which will be introduced in Sec.~\ref{sec:method-experience} in detail.
The proposal module first reasons over the context, and then emits up to $K$ atomic candidate actions.
The resulting candidate set is represented as:
\begin{equation}\label{eq:proposal-sampling}
\begin{aligned}
\mathcal{A}_t&=\{a_t^{(k)}\}_{k=1}^{N_t}, & 1\leq N_t\leq K,\\
a_t^{(k)}&\sim q_\pi(a\mid x_t,M_t^{\mathrm{P}}), & k=1,\ldots,N_t.
\end{aligned}
\end{equation}

Here $q_\pi$ denotes the proposal policy, $K$ is the maximum number of candidates, and $N_t$ is the number generated at step $t$.
This adapts MPPI's candidate sampling to language actions: instead of adding perturbations to a continuous control sequence, the proposal policy generates candidate actions that could advance the task, gather information, or recover from failures.

Next, for each proposal, the world model receives the goal, current observation, candidate action, and retrieved world model memory $M_t^{\mathrm{world}}$.

It predicts a one-step textual observation of the action's immediate consequence:
\begin{equation}
\hat{o}_{t+1}^{(k)}
\sim
\hat{P}(o^{\prime}\mid g,o_t,a_t^{(k)},M_t^{\mathrm{world}}).
\end{equation}
We use one-step prediction rather than multi-step rollouts to simplify the architecture and control computational cost.
We denote the resulting candidate-specific imagined transition by
\begin{equation}
\hat{\tau}_t^{(k)}
=
(g,h_t,o_t,a_t^{(k)},\hat{o}_{t+1}^{(k)}),
\end{equation}
which bundles the task context, proposed action, and predicted outcome used to evaluate that candidate.
In classical MPPI, candidate trajectories are evaluated by a numerical cost function.
For language agents, we instantiate this cost-evaluation stage through textual outcome prediction and evaluation of task progress and risk.
A language evaluator maps each imagined transition to a bounded predictive cost:
\begin{equation}
C_t^{(k)}
=
C_\psi(g,h_t,o_t,a_t^{(k)},\hat{o}_{t+1}^{(k)})
\in[0,100],
\end{equation}
where lower cost indicates greater predicted usefulness for the goal.
The evaluator considers expected task progress, information value, and whether the predicted outcomes are consistent with the available evidence.
To account for uncertainty in predicted costs, we optimize a distribution $\mathbf w$ that favors low-cost candidates while retaining alternatives.
We denote its regularized objective by $\mathcal F_t(\mathbf w)$ and its minimizing distribution by $\mathbf w_t^\star$; the objective combines weighted predictive cost with a KL penalty from uniform weighting.

\begin{proposition}[Optimal candidate weighting]\label{prop:weighting}
For $N_t\geq 1$ candidates with finite costs, let $\Delta_{N_t}$ denote the probability simplex and $u_t^{(k)}=1/N_t$ the uniform reference. For $\lambda>0$, consider
\begin{equation}
\begin{aligned}
\mathbf w_t^\star
&=\arg\min_{\mathbf w\in\Delta_{N_t}}\mathcal F_t(\mathbf w),\\
\mathcal F_t(\mathbf w)
&=\sum_{k=1}^{N_t}w^{(k)}C_t^{(k)}
+\lambda D_{\mathrm{KL}}(\mathbf w\Vert\mathbf u_t),
\end{aligned}
\label{eq:candidate-objective}
\end{equation}
The objective has a unique minimizer
\begin{equation}
w_t^{(k)\star}
=\frac{\exp(-C_t^{(k)}/\lambda)}
{\sum_{j=1}^{N_t}\exp(-C_t^{(j)}/\lambda)}.
\label{eq:candidate-weights}
\end{equation}
For every $\mathbf w\in\Delta_{N_t}$,
\begin{equation}
\mathcal F_t(\mathbf w)-\mathcal F_t(\mathbf w_t^\star)
=\lambda D_{\mathrm{KL}}(\mathbf w\Vert\mathbf w_t^\star)\geq 0.
\end{equation}
\end{proposition}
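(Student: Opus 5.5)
The plan is to prove the identity $\mathcal F_t(\mathbf w)-\mathcal F_t(\mathbf w_t^\star)=\lambda D_{\mathrm{KL}}(\mathbf w\Vert\mathbf w_t^\star)$ directly. Existence, uniqueness, and the closed form then follow at once, with no Lagrange multipliers needed.

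First define $Z_t=\sum_{j=1}^{N_t}\exp(-C_t^{(j)}/\lambda)$. It is finite and positive because the costs are finite. Take $\mathbf w_t^\star$ as in \eqref{eq:candidate-weights}; it lies in the interior of $\Delta_{N_t}$. Take logarithms of its entries:
\begin{equation*}
\log w_t^{(k)\star}=-C_t^{(k)}/\lambda-\log Z_t,
\end{equation*}
which rearranges to $C_t^{(k)}=-\lambda\log w_t^{(k)\star}-\lambda\log Z_t$.

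Next, for arbitrary $\mathbf w\in\Delta_{N_t}$, expand the KL term against the uniform reference. Using the convention $0\log 0=0$, this gives $D_{\mathrm{KL}}(\mathbf w\Vert\mathbf u_t)=\sum_k w^{(k)}\log w^{(k)}+\log N_t$. Substitute the expression for $C_t^{(k)}$ and use $\sum_k w^{(k)}=1$:
\begin{equation*}
\mathcal F_t(\mathbf w)=\lambda\sum_k w^{(k)}\log\frac{w^{(k)}}{w_t^{(k)\star}}-\lambda\log Z_t+\lambda\log N_t
=\lambda D_{\mathrm{KL}}(\mathbf w\Vert\mathbf w_t^\star)+\lambda\log\frac{N_t}{Z_t}.
\end{equation*}
Evaluating at $\mathbf w=\mathbf w_t^\star$ gives $\mathcal F_t(\mathbf w_t^\star)=\lambda\log(N_t/Z_t)$. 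Subtracting yields the claimed identity.

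Finally, Gibbs' inequality states that $D_{\mathrm{KL}}(\mathbf w\Vert\mathbf w_t^\star)\ge 0$, with equality iff $\mathbf w=\mathbf w_t^\star$. The equality case holds because $\mathbf w_t^\star$ has full support and $\log$ is strictly concave. Since $\lambda>0$, this shows that $\mathbf w_t^\star$ is a minimizer and that it is the unique one.

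The only delicate point is the boundary of the simplex, where some $w^{(k)}=0$. There I will check that the $0\log 0=0$ convention makes every sum above well defined and termwise consistent. This works because $w_t^{(k)\star}>0$ for all $k$, so no term of the form $\log(\cdot/0)$ ever appears. Everything else is routine algebra.
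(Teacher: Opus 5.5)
Your proof is correct and follows essentially the same route as the paper: both rewrite $\mathcal F_t(\mathbf w)$ as $\lambda D_{\mathrm{KL}}(\mathbf w\Vert\mathbf w_t^\star)$ plus a constant, then conclude from nonnegativity of the KL divergence, which vanishes only at $\mathbf w_t^\star$. The only difference is cosmetic: the paper puts the uniform reference inside the normalizer, $Z=\sum_k u^{(k)}\exp(-C^{(k)}/\lambda)$, so its constant $-\lambda\log Z$ equals your $\lambda\log(N_t/Z_t)$.
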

The minimizing distribution is the standard Gibbs cost-tilted form underlying MPPI~\citep{williams2017information}, which also admits a free-energy interpretation of bounded-rational choice~\citep{ortega2013thermodynamics}.
The proof is in Appendix~\ref{app:weighting-proofs}.
Proposition~\ref{prop:weighting} shows that candidate weights depend on cost differences scaled by $\lambda$: smaller $\lambda$ emphasizes the lowest predicted costs, whereas larger $\lambda$ retains more balanced influence across candidates.
Our main configuration uses $\lambda=10$.
We use $w_t^{(k)}=w_t^{(k)\star}$ below.
The weights then guide the final controller, which also receives the goal, trajectory, observation, candidate set, predicted outcomes, and retrieved planner experiences.
We denote its action-generation step by
\begin{equation}\label{eq:final-controller}
\begin{aligned}
\mathcal{E}_t&=\{(\hat{o}_{t+1}^{(k)},w_t^{(k)})\}_{k=1}^{N_t},\\
a_t&\sim\pi_{\mathrm{final}}(\cdot\mid x_t,\mathcal{A}_t,\mathcal{E}_t,M_t^{\mathrm{P}}),
\end{aligned}
\end{equation}

where $\pi_{\mathrm{final}}$ is the final controller.
It generates an action, selecting a proposed action or synthesizing a new atomic action when useful.
The controller executes only the selected action in the real environment.
It then appends the resulting observation to the trajectory and repeats the process from the updated context.

\subsection{Dual-Branch Experience Memory}\label{sec:method-experience}

The weighting in Proposition~\ref{prop:weighting} depends on both the proposed actions and the accuracy of their predicted costs.
It cannot assign weight to a useful action absent from the candidate set, and inaccurate forecasts can favor unsuitable candidates.
Moreover, optimal candidate weights do not guarantee an optimal final action, since action selection is performed by a separate controller.
Our dual-branch memory aligns with these limitations: the planner branch targets candidate coverage and final action selection, while the world model branch adapts forecasts to environment-specific dynamics.

\begin{proposition}[Coverage, cost error, and selection]\label{prop:cost-error}
Fix a nonempty candidate set, its uniform reference $\mathbf u_t$, and $\lambda>0$.
Let $\bar C_t(a)$ denote a reference cost for any action $a$, with $\bar C_t^{(k)}=\bar C_t(a_t^{(k)})$, and let $\mathbf C_t$ be the predictive costs, with
\begin{equation}
\max_k|C_t^{(k)}-\bar C_t^{(k)}|\leq\epsilon.
\end{equation}
Define $\bar{\mathcal F}_t$ by replacing $\mathbf C_t$ with $\bar{\mathbf C}_t$ in Eq.~\eqref{eq:candidate-objective}, and let $\bar{\mathbf w}_t$ minimize $\bar{\mathcal F}_t$.
The predictive weights $\mathbf w_t^\star$ satisfy
\begin{align}
\bar{\mathcal F}_t(\mathbf w_t^\star)
-\bar{\mathcal F}_t(\bar{\mathbf w}_t)&\leq 2\epsilon,\label{eq:objective-gap}\\
D_{\mathrm{KL}}(\mathbf w_t^\star\Vert\bar{\mathbf w}_t)
&\leq \frac{2\epsilon}{\lambda}.\label{eq:kl-gap}
\end{align}
Moreover, for any comparator action $a^\circ$, possibly outside the candidate set, define the coverage gap and the selection gap
\begin{align}
\Delta_t^{\mathrm{cov}}&=\min_k\bar C_t^{(k)}-\bar C_t(a^\circ),\label{eq:coverage-gap}\\
\delta_t^{\mathrm{sel}}&=\mathbb E_{a_t\sim\pi_{\mathrm{final}}}\big[\bar C_t(a_t)\big]-\sum_{k=1}^{N_t}w_t^{(k)\star}\bar C_t^{(k)},\label{eq:selection-gap}
\end{align}
where the expectation is over the controller's output given its inputs in Eq.~\eqref{eq:final-controller}. Then the executed action satisfies
\begin{equation}
\begin{aligned}
&\mathbb E_{a_t\sim\pi_{\mathrm{final}}}\big[\bar C_t(a_t)\big]-\bar C_t(a^\circ)\\
&\quad\leq
\underbrace{\delta_t^{\mathrm{sel}}}_{\text{selection}}
+\underbrace{\Delta_t^{\mathrm{cov}}}_{\text{coverage}}
+\underbrace{2\epsilon}_{\text{cost error}}
+\underbrace{\lambda\log N_t}_{\text{temperature}}.
\end{aligned}
\label{eq:regret-decomp}
\end{equation}
\end{proposition}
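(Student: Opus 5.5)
The plan is to derive everything from Proposition~\ref{prop:weighting} and the uniform closeness $\max_k|C_t^{(k)}-\bar C_t^{(k)}|\le\epsilon$. For any $\mathbf w\in\Delta_{N_t}$ the two objectives share the KL term, so
\[
\bigl|\mathcal F_t(\mathbf w)-\bar{\mathcal F}_t(\mathbf w)\bigr|=\Bigl|\sum_k w^{(k)}\bigl(C_t^{(k)}-\bar C_t^{(k)}\bigr)\Bigr|\le\epsilon .
\]
For the objective gap \eqref{eq:objective-gap} we chain
\[
\bar{\mathcal F}_t(\mathbf w_t^\star)\le\mathcal F_t(\mathbf w_t^\star)+\epsilon\le\mathcal F_t(\bar{\mathbf w}_t)+\epsilon\le\bar{\mathcal F}_t(\bar{\mathbf w}_t)+2\epsilon ,
\]
where the middle step uses that $\mathbf w_t^\star$ minimizes $\mathcal F_t$. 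For the KL bound \eqref{eq:kl-gap} we apply the identity of Proposition~\ref{prop:weighting} to $\bar{\mathcal F}_t$, which has minimizer $\bar{\mathbf w}_t$, and evaluate it at $\mathbf w=\mathbf w_t^\star$. This gives $\lambda D_{\mathrm{KL}}(\mathbf w_t^\star\Vert\bar{\mathbf w}_t)=\bar{\mathcal F}_t(\mathbf w_t^\star)-\bar{\mathcal F}_t(\bar{\mathbf w}_t)\le 2\epsilon$, and we divide by $\lambda$.

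For the regret decomposition \eqref{eq:regret-decomp} we telescope. Using the definition of $\delta_t^{\mathrm{sel}}$,
\[
\mathbb E[\bar C_t(a_t)]-\bar C_t(a^\circ)=\delta_t^{\mathrm{sel}}+\sum_k w_t^{(k)\star}\bar C_t^{(k)}-\bar C_t(a^\circ),
\]
so it remains to show $\sum_k w_t^{(k)\star}\bar C_t^{(k)}\le\min_k\bar C_t^{(k)}+2\epsilon+\lambda\log N_t$. After that, adding and subtracting $\min_k\bar C_t^{(k)}$ produces $\Delta_t^{\mathrm{cov}}$. Since KL is nonnegative, $\sum_k w_t^{(k)\star}\bar C_t^{(k)}\le\bar{\mathcal F}_t(\mathbf w_t^\star)$, and by \eqref{eq:objective-gap} this is at most $\bar{\mathcal F}_t(\bar{\mathbf w}_t)+2\epsilon$. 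Let $e_{k^\ast}$ be the vertex at $k^\ast\in\arg\min_k\bar C_t^{(k)}$. By minimality, $\bar{\mathcal F}_t(\bar{\mathbf w}_t)\le\bar{\mathcal F}_t(e_{k^\ast})=\min_k\bar C_t^{(k)}+\lambda D_{\mathrm{KL}}(e_{k^\ast}\Vert\mathbf u_t)$, and $D_{\mathrm{KL}}(e_{k^\ast}\Vert\mathbf u_t)=\log N_t$. Combining these gives the claim. The expectation over $\pi_{\mathrm{final}}$ is conditional on the inputs in Eq.~\eqref{eq:final-controller}, so all other quantities are fixed and no measurability issues arise.

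The only step needing care is making sure the bound goes through the regularized objective, so that the temperature term $\lambda\log N_t$ appears, rather than comparing weighted costs directly. A direct comparison would lack a minimality property, because $\mathbf w_t^\star$ does not minimize the unregularized weighted cost. Dropping the KL term on the left, and paying $\log N_t$ on the right for the vertex comparator, resolves this. As a sanity check, $D_{\mathrm{KL}}(\mathbf w\Vert\mathbf u_t)=\log N_t-H(\mathbf w)\le\log N_t$, which confirms the bound on the vertex term.
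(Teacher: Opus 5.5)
Your proof is correct and matches the paper's argument: the uniform $\epsilon$-closeness of $\mathcal F_t$ and $\bar{\mathcal F}_t$, the three-step chain for Eq.~\eqref{eq:objective-gap}, the identity of Proposition~\ref{prop:weighting} for Eq.~\eqref{eq:kl-gap}, and dropping the KL term before adding $\delta_t^{\mathrm{sel}}$ and $\Delta_t^{\mathrm{cov}}$. The only cosmetic difference is in bounding $\bar{\mathcal F}_t(\bar{\mathbf w}_t)$: you evaluate it at the vertex $e_{k^\ast}$, while the paper writes it as $-\lambda\log\bar Z_t$ and keeps only the largest term of $\bar Z_t$, which gives the identical bound $\min_k\bar C_t^{(k)}+\lambda\log N_t$.
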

The proof is in Appendix~\ref{app:weighting-proofs}.
Reference costs represent an ideal assessment of actual action consequences, which are not available to the agent.
Taking $a^\circ$ to be a reference-optimal action makes the left-hand side of Eq.~\eqref{eq:regret-decomp} a per-step regret and $\Delta_t^{\mathrm{cov}}\geq 0$. Eq.~\eqref{eq:regret-decomp} separates the limitations above: the coverage gap is positive only when a better action than every candidate exists, and shrinks as better candidates are proposed, while $\epsilon$ decreases with more accurate forecasts and cost assessments.
The selection gap compares the controller’s choice with the weighted candidates: it is negative when the controller corrects inaccurate weights or generates a better action, and positive when it disregards reliable guidance.
This decomposition guides our memory design.
Candidate-generation memories target $\Delta_t^{\mathrm{cov}}$, final-planner memories target $\delta_t^{\mathrm{sel}}$, and world model memories target $\epsilon$.
Both memory branches update textual memories while keeping the model parameters frozen.

\paragraph{Planner branch.}
After each real transition, an experience extraction module labels the step as SUCCESS, FAILURE, or IGNORE based on the observed outcome.
It then records experience only for SUCCESS and FAILURE, producing separate summaries for candidate generation and final action selection.
Each memory bank $\mathcal{W}_{d,y}$ retains its most recent $C$ records.
To retrieve memories, let $\mathcal{T}(\cdot)$ map text to its set of lowercase alphanumeric tokens, and let $\mathcal{Q}_t=\mathcal{T}(g)\cup\mathcal{T}(o_t)$ be the query tokens. We score each memory $m$ by token-set Jaccard similarity and keep the $R$ highest-scoring ones:
\begin{equation}
\begin{aligned}
s_t(m)&=\frac{|\mathcal{T}(m)\cap\mathcal{Q}_t|}{|\mathcal{T}(m)\cup\mathcal{Q}_t|},\\[2pt]
\mathcal{R}_{d,y,t}&=\operatorname*{top\text{-}\mathit{R}}_{m\in\mathcal{W}_{d,y}} s_t(m),
\end{aligned}
\end{equation}
where $d\in\{\mathrm{cand},\mathrm{final}\}$ indexes candidate-generation or final-planner experiences, and $y\in\{\mathrm{success},\mathrm{failure}\}$ indexes memory banks for experiences from successful and failed steps.
Accordingly, the planner experiences in Eqs.~\eqref{eq:proposal-sampling} and~\eqref{eq:final-controller} are $M_t^{\mathrm{P}}=(M_t^{\mathrm{cand}},M_t^{\mathrm{final}})$, where $M_t^{d}=\bigcup_{y}\mathcal{R}_{d,y,t}$; the proposal policy conditions on $M_t^{\mathrm{cand}}$ and the final controller on $M_t^{\mathrm{final}}$.

\begin{table*}[!ht]
\centering
\small
\setlength{\tabcolsep}{5pt}
\begin{tabular}{l!{\vrule width 0.4pt}lcccccc}
\toprule
 \multicolumn{1}{l!{\vrule width 0.4pt}}{\multirow{2}{*}[-0.4em]{Metric}\rule[-\dimexpr\dp\strutbox+\cmidrulewidth+1.2pt\relax]{0pt}{0pt}} & \multirow{2}{*}[-0.4em]{Method} & \multicolumn{2}{c}{\raisebox{-0.25\height}{\tikz[x=1.1em,y=1.1em]{\path[fill=blue!65!cyan] (0.5,1) .. controls (0.5,0.68) and (0.68,0.5) .. (1,0.5) .. controls (0.68,0.5) and (0.5,0.32) .. (0.5,0) .. controls (0.5,0.32) and (0.32,0.5) .. (0,0.5) .. controls (0.32,0.5) and (0.5,0.68) .. (0.5,1) -- cycle;}}\hspace{0.45em}\textit{Gemma-4-26B-A4B}} & \multicolumn{2}{c}{\raisebox{-0.25\height}{\includegraphics[height=1.1em]{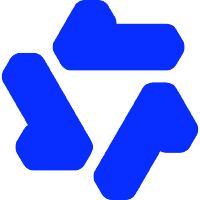}}\hspace{0.45em}\textit{Qwen3.5-9B}} & \multicolumn{2}{c}{\raisebox{-0.25\height}{\openailogo}\hspace{0.45em}\textit{GPT-5.4-mini}} \tabularnewline
\noalign{\global\aboverulesep=0pt \global\belowrulesep=0pt}
\cmidrule(lr){3-4} \cmidrule(lr){5-6} \cmidrule(l){7-8}
\noalign{\global\aboverulesep=0.4ex \global\belowrulesep=0.65ex}
 \multicolumn{1}{l!{\vrule width 0.4pt}}{\rule{0pt}{2.6ex}} & & ALFWorld & ScienceWorld & ALFWorld & ScienceWorld & ALFWorld & ScienceWorld \tabularnewline
\midrule
\multirow{4}{*}{SR@1} & ReAct & 20.1 & 31.1 & 14.9 & \textbf{34.4} & 38.8 & \underline{52.2} \tabularnewline
 & ITP-I & \underline{32.8} & \underline{36.7} & 6.7 & 27.8 & \underline{50.0} & 51.1 \tabularnewline
 & WorldEvolver & 16.4 & 32.2 & \underline{24.6} & 24.4 & 24.6 & 45.6 \tabularnewline
 & \cellcolor[HTML]{EAF3FB}\textbf{Ours} & \cellcolor[HTML]{EAF3FB}\textbf{65.7} & \cellcolor[HTML]{EAF3FB}\textbf{46.7} & \cellcolor[HTML]{EAF3FB}\textbf{28.4} & \cellcolor[HTML]{EAF3FB}\underline{32.2} & \cellcolor[HTML]{EAF3FB}\textbf{64.9} & \cellcolor[HTML]{EAF3FB}\textbf{56.7} \tabularnewline
\midrule
\multirow{4}{*}{SR@5} & ReAct & 24.6 & 36.7 & 24.6 & \textbf{52.2} & 48.5 & 62.2 \tabularnewline
 & ITP-I & \underline{54.5} & \underline{50.0} & 12.7 & 46.7 & \underline{81.3} & \textbf{70.0} \tabularnewline
 & WorldEvolver & 20.9 & 43.3 & \underline{37.3} & 42.2 & 30.6 & 56.7 \tabularnewline
 & \cellcolor[HTML]{EAF3FB}\textbf{Ours} & \cellcolor[HTML]{EAF3FB}\textbf{84.3} & \cellcolor[HTML]{EAF3FB}\textbf{60.0} & \cellcolor[HTML]{EAF3FB}\textbf{54.5} & \cellcolor[HTML]{EAF3FB}\underline{50.0} & \cellcolor[HTML]{EAF3FB}\textbf{92.5} & \cellcolor[HTML]{EAF3FB}\underline{66.7} \tabularnewline
\bottomrule
\end{tabular}
\caption{Overall success rates (\%) on ALFWorld and ScienceWorld across three backbones, reported as SR@1 and SR@5. Bold and underlined success rates indicate the best and second-best results, respectively. }
\label{tab:main}
\end{table*}

\paragraph{World model branch.}
For the world model branch, we adopt the episodic and semantic memory design of WorldEvolver~\citep{zhang2026worldevolver}, which we use to reduce the predictive cost error $\epsilon$ in Proposition~\ref{prop:cost-error}.
To support prediction for multiple candidate actions, we retrieve relevant transitions separately for each candidate and use them with shared environment rules to predict next observations.
Memory is updated only after real interaction, storing observed transitions and extracting rules from prediction errors.
The implementation details are provided in Appendix~\ref{app:world-memory}.

\section{Experiments}

\subsection{Experimental Setup}

\paragraph{Benchmarks.}
We evaluate our method on ALFWorld~\citep{shridhar2021alfworld} and ScienceWorld~\citep{wang2022scienceworld} provided by AgentBoard~\citep{ma2024agentboard}.
Our evaluation includes 134 ALFWorld tasks and 90 ScienceWorld tasks, with each trial limited to 30 environment steps.
ALFWorld is a text-based household benchmark of tasks such as picking up, cleaning, heating, and moving objects, while ScienceWorld is a text-based science benchmark where agents manipulate objects and carry out simple experiments.
Our primary metrics are SR@1, the fraction of tasks solved in a single trial, and SR@5, the fraction solved in at least one of five trials.

\paragraph{Baselines.}
We compare our method with multiple baselines.
ReAct~\citep{yao2023react} interleaves reasoning and action, using environment observations to guide subsequent decisions.
ITP-I, the training-free variant of Imagine-then-Plan~\citep{liu2026imagine}, uses adaptive lookahead to imagine future outcomes and guide action selection.
WorldEvolver~\citep{zhang2026worldevolver} represents an evolving language world model pipeline that uses episodic and semantic experience.
All baselines use the same task files and backbone.

\paragraph{Implementation details.}
At each step, the agent generates up to $K=8$ candidate actions and applies MPPI-style weighting with $\lambda=10$ to their predicted costs.
Candidate generation and final planning read from the corresponding planner memory bank with capacity $C=24$ and retrieval count $R=4$.
The policy and world model parameters remain frozen during deployment. Following WorldEvolver~\citep{zhang2026worldevolver}, we do not reset memory within each evaluation setting.
World model memory hyperparameters and prompt templates are in Appendices~\ref{app:world-memory} and~\ref{app:prompts}.
We also vary the backbone, evaluating Gemma-4-26B-A4B~\citep{gemmateam2026gemma4}, Qwen3.5-9B~\citep{qwen2026qwen35}, and GPT-5.4-mini~\citep{openai2026gpt54mini}, to test generalization across models.
In all tables, \textbf{bold} and \underline{underlined} numbers mark the best and second-best results per column, shaded rows are the default setting, and Avg. is the mean over ALFWorld and ScienceWorld.
\subsection{Results}

\paragraph{Results on ALFWorld and ScienceWorld.}
Table~\ref{tab:main} compares our method with ReAct, ITP-I, and WorldEvolver on ALFWorld and ScienceWorld across three backbones.
With Gemma-4-26B-A4B, Ours leads on both benchmarks, scoring 65.7/84.3 on ALFWorld and 46.7/60.0 on ScienceWorld, and improves over the strongest baseline, ITP-I, by 32.9/29.8 and 10.0/10.0 points, respectively.
With Qwen3.5-9B and GPT-5.4-mini, Ours achieves the best ALFWorld results and remains competitive on ScienceWorld: it ranks second behind ReAct on Qwen3.5-9B, and leads in SR@1 but ranks second in SR@5 on GPT-5.4-mini.
Across the three backbones, Ours achieves the strongest overall performance, ranking first in nine comparisons and second in the remaining three.
Per-task ALFWorld results are reported in Appendix~\ref{app:detailed-results}.

\begin{figure}[t]
\centering
\includegraphics[width=0.49\textwidth]{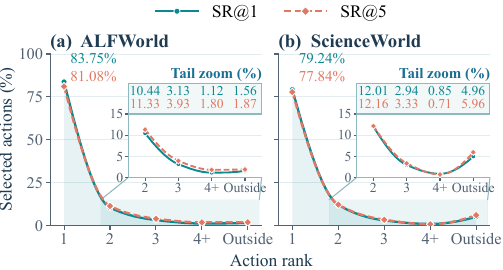}
\caption{Executed actions by candidate rank under MPPI-style weighting with Gemma-4-26B-A4B. Rank 1 includes ties for the highest weight; Outside denotes actions not in the proposal set.}
\label{fig:decision-distribution}
\end{figure}

\paragraph{Action selection analysis.}
Figure~\ref{fig:decision-distribution} shows that the controller selects a highest-weight candidate in 77.8--83.8\% of decisions across both benchmarks and trial settings.
Second-ranked candidates account for 10.4--12.2\%, while actions outside the proposal set account for 1.6--6.0\%.
The controller therefore usually follows the highest-weight candidate, while retaining the flexibility to choose other candidates or generate new actions.

\begin{figure*}[t]
\centering
\includegraphics[width=0.99\textwidth]{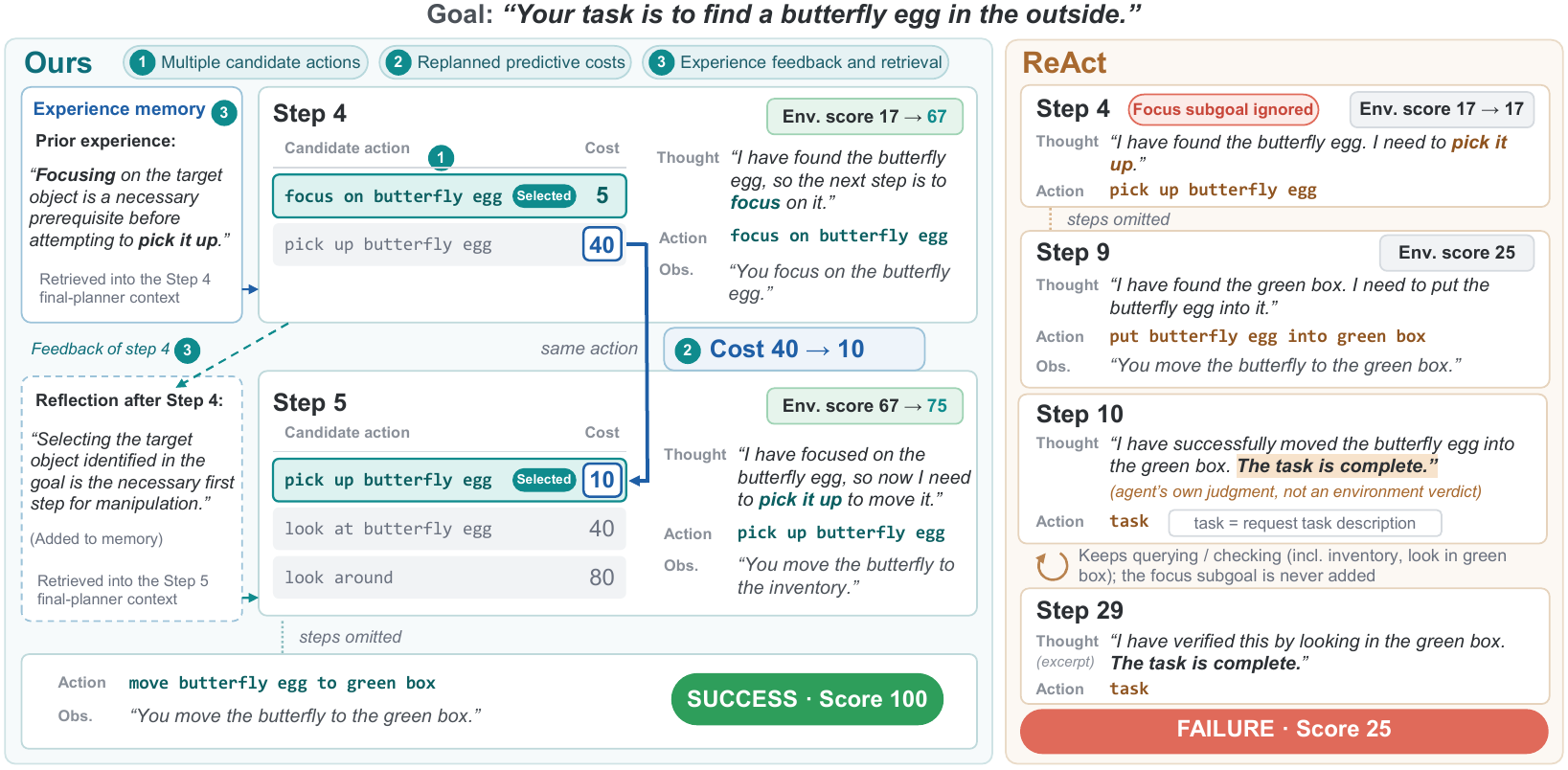}
\caption{An example of how retrieved experience and predictive evaluation guide action selection. Our agent completes the focus subgoal before pickup and succeeds, while ReAct skips this subgoal and fails.}
\label{fig:case}
\end{figure*}

\paragraph{Illustrative example.}
Figure~\ref{fig:case} shows how retrieved experience and predictive scoring guide a ScienceWorld trajectory.
At Step 4, retrieved experience identifies focusing on the target as a prerequisite for pickup, and the controller selects the lower-cost focus action over immediate pickup.
After this subgoal is completed, the predicted cost of pickup falls from 40 to 10, making it the preferred action at Step 5 and leading to environment-verified success with a score of 100.
In contrast, ReAct skips the focus subgoal and repeatedly declares completion despite ending with failure.

\subsection{Ablation Study}
\paragraph{Action candidate number $K$.}
Table~\ref{tab:ablation-control} evaluates candidate-set sizes $K\in\{1,2,4,8,16\}$.
The default $K=8$ gives the highest SR@1 on both benchmarks, exceeding $K=1$ by 5.3 percentage points on ALFWorld and 3.4 points on ScienceWorld.
Larger sets help SR@5 only on ScienceWorld, where $K=16$ raises it from 60.0 to 68.9 while SR@1 declines, possibly because of its larger and more varied action space.
Since larger candidate sets do not consistently help, we use $K=8$ in all settings as a moderate choice that balances candidate coverage and cost.

\begin{table}[t]
\centering
\small
\setlength{\tabcolsep}{5pt}
\begin{tabular}{ccccccc}
\toprule
\multirow{2}{*}{$K$} & \multicolumn{2}{c}{ALFWorld} & \multicolumn{2}{c}{ScienceWorld} & \multicolumn{2}{c}{Avg.} \tabularnewline
\cmidrule(lr){2-3} \cmidrule(lr){4-5} \cmidrule(l){6-7}
 & SR@1 & SR@5 & SR@1 & SR@5 & SR@1 & SR@5 \tabularnewline
\midrule
1 & 60.4 & 82.1 & \underline{43.3} & 58.9 & 51.9 & 70.5 \tabularnewline
2 & 59.7 & \textbf{85.1} & 38.9 & 56.7 & 49.3 & 70.9 \tabularnewline
4 & 61.9 & 82.8 & 40.0 & 58.9 & 51.0 & 70.9 \tabularnewline
\rowcolor[HTML]{EAF3FB}8 & \textbf{65.7} & \underline{84.3} & \textbf{46.7} & \underline{60.0} & \textbf{56.2} & \underline{72.2} \tabularnewline
16 & \underline{64.2} & \underline{84.3} & 42.2 & \textbf{68.9} & \underline{53.2} & \textbf{76.6} \tabularnewline
\bottomrule
\end{tabular}
\caption{Effect of the number of candidate actions $K$ imagined per step, with Gemma-4-26B-A4B (\%). The default setting is shaded.}
\label{tab:ablation-control}
\vspace{-0.2cm}
\end{table}

\paragraph{Weight temperature $\lambda$.}
Table~\ref{tab:ablation-lambda} evaluates $\lambda\in\{1,10,50\}$, where larger values produce flatter weights and reduce the influence of predictive cost differences.
Performance is similar at $\lambda=1$ and $\lambda=10$, whereas $\lambda=50$ degrades ALFWorld, lowering SR@1 from 65.7 to 53.7, and ScienceWorld is less sensitive.
We use $\lambda=10$ as the default, an intermediate temperature that keeps cost differences informative while softening the weights, and the results are not sensitive to this choice between $\lambda=1$ and $\lambda=10$.

\begin{table}[t]
\centering
\small
\setlength{\tabcolsep}{5pt}
\begin{tabular}{ccccccc}
\toprule
\multirow{2}{*}{$\lambda$} & \multicolumn{2}{c}{ALFWorld} & \multicolumn{2}{c}{ScienceWorld} & \multicolumn{2}{c}{Avg.} \tabularnewline
\cmidrule(lr){2-3} \cmidrule(lr){4-5} \cmidrule(l){6-7}
 & SR@1 & SR@5 & SR@1 & SR@5 & SR@1 & SR@5 \tabularnewline
\midrule
1 & \textbf{65.7} & \underline{81.3} & \textbf{48.9} & \underline{62.2} & \textbf{57.3} & 71.8 \tabularnewline
\rowcolor[HTML]{EAF3FB}10 & \textbf{65.7} & \textbf{84.3} & 46.7 & 60.0 & \underline{56.2} & \textbf{72.2} \tabularnewline
50 & 53.7 & 80.6 & \underline{47.8} & \textbf{63.3} & 50.8 & \underline{72.0} \tabularnewline
\bottomrule
\end{tabular}
\caption{Effect of weight temperature $\lambda$ with Gemma-4-26B-A4B (\%). Larger $\lambda$ gives more uniform MPPI-style weights. The default setting is shaded.}
\label{tab:ablation-lambda}
\end{table}

\paragraph{Experience ablations.}
Table~\ref{tab:ablation-memory} isolates the contributions of planner memory and world model memory.
Their effects differ across benchmarks: world model memory alone helps on ALFWorld, raising SR@1 from 56.7 to 68.7, but hurts on ScienceWorld, whereas planner memory alone helps on ScienceWorld but hurts on ALFWorld.
Enabling both branches achieves the highest average SR@1 and SR@5, though the gains are not uniform across benchmarks.
Figure 4 shows cumulative SR@1 as tasks are executed: the two curves coincide on the first tasks and then separate, with the gap persisting on both benchmarks, consistent with experience accumulating across trajectories.
Experience also takes effect within a single trajectory, as shown in Figure~\ref{fig:case}, where the experience summarized at one step is retrieved at the next.

\begin{table}[t]
\centering
\small
\setlength{\tabcolsep}{2pt}
\begin{tabular}{cccccccc}
\toprule
\multicolumn{2}{c}{Memory} & \multicolumn{2}{c}{ALFWorld} & \multicolumn{2}{c}{ScienceWorld} & \multicolumn{2}{c}{Avg.} \tabularnewline
\cmidrule(lr){1-2} \cmidrule(lr){3-4} \cmidrule(lr){5-6} \cmidrule(l){7-8}
Planner & WM & SR@1 & SR@5 & SR@1 & SR@5 & SR@1 & SR@5 \tabularnewline
\midrule
\textcolor{gray}{$\times$} & \textcolor{gray}{$\times$} & 56.7 & 80.6 & 38.9 & 57.8 & 47.8 & 69.2 \tabularnewline
$\checkmark$ & \textcolor{gray}{$\times$} & 49.3 & 73.9 & \underline{44.4} & \textbf{60.0} & 46.9 & 67.0 \tabularnewline
\textcolor{gray}{$\times$} & $\checkmark$ & \textbf{68.7} & \textbf{88.8} & 36.7 & 54.4 & \underline{52.7} & \underline{71.6} \tabularnewline
\rowcolor[HTML]{EAF3FB}$\checkmark$ & $\checkmark$ & \underline{65.7} & \underline{84.3} & \textbf{46.7} & \textbf{60.0} & \textbf{56.2} & \textbf{72.2} \tabularnewline
\bottomrule
\end{tabular}
\caption{Effect of planner and world model (WM) memory with Gemma-4-26B-A4B (\%). `$\checkmark$' indicates an enabled branch. The default setting is shaded.}
\label{tab:ablation-memory}
\end{table}

\begin{figure}
    \centering
    \includegraphics[width=0.99\linewidth]{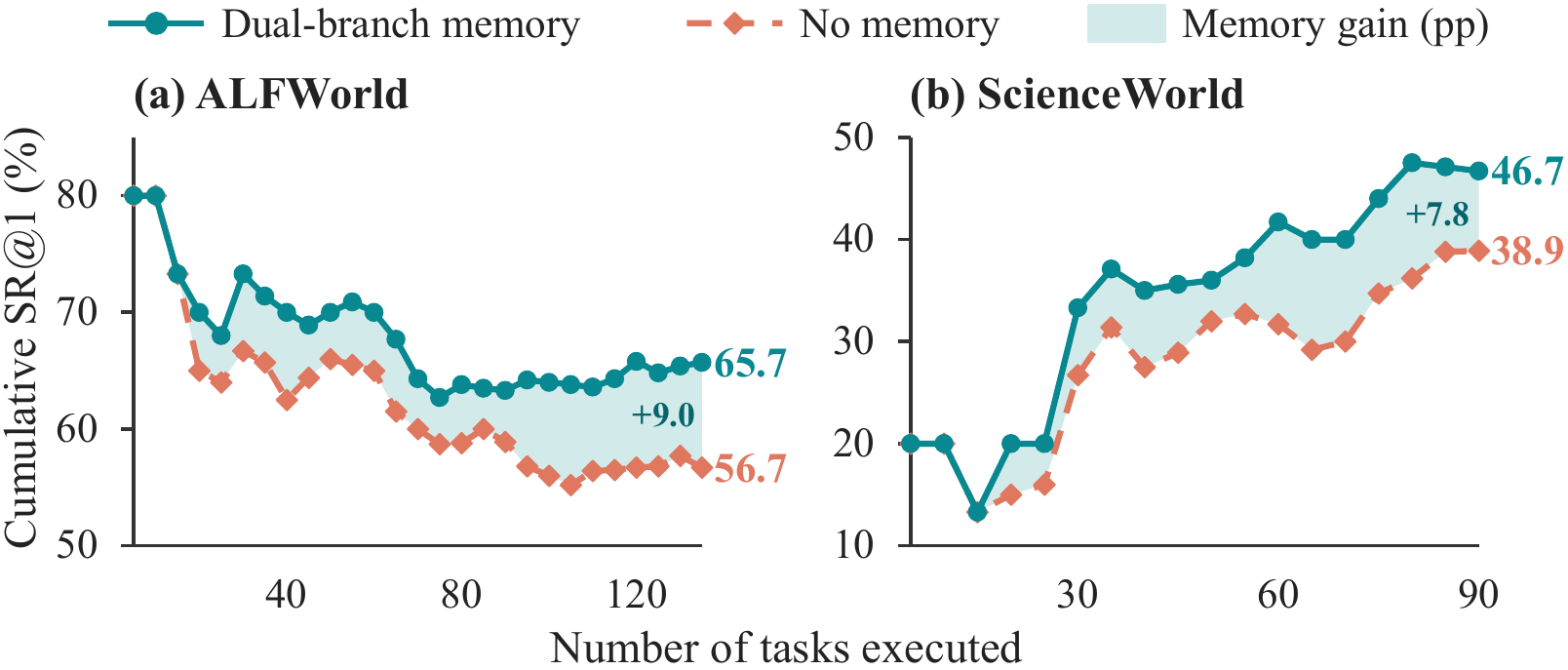}
    \caption{Cumulative SR@1 over executed tasks with and without dual-branch memory. Shaded areas show the gain in percentage points.}
    \label{fig:cumulative_sr1}
\end{figure}

\paragraph{Agent memory budget.}
Table~\ref{tab:ablation-agent-memory} examines the capacity $C$ and the number $R$ of records retrieved per bank of planner memory, so each planning call uses at most $2R$ records.
At $R=4$, increasing $C$ from 4 to 24 improves SR@1 on both benchmarks, while $C=36$ lowers it.
Performance is relatively stable across moderate budgets, namely $C\in\{12,24,36\}$ and $R\in\{1,4\}$, and $R=8$ gives mixed results.
We therefore use $(C,R)=(24,4)$ as a shared default with moderate storage and retrieval budgets.

\begin{table}[t]
\centering
\small
\setlength{\tabcolsep}{8.5pt}
\begin{tabular}{cccccc}
\toprule
\multirow{2}{*}{$C$} & \multirow{2}{*}{$R$} & \multicolumn{2}{c}{ALFWorld} & \multicolumn{2}{c}{ScienceWorld} \tabularnewline
\cmidrule(lr){3-4} \cmidrule(l){5-6}
 & & SR@1 & SR@5 & SR@1 & SR@5 \tabularnewline
\midrule
4 & 4 & 62.7 & \underline{85.8} & 36.7 & 60.0 \tabularnewline
12 & 4 & \underline{64.9} & \underline{85.8} & 44.4 & 58.9 \tabularnewline
\rowcolor[HTML]{EAF3FB}24 & 4 & \textbf{65.7} & 84.3 & \underline{46.7} & 60.0 \tabularnewline
36 & 4 & 63.4 & 84.3 & 42.2 & \textbf{64.4} \tabularnewline
\midrule
24 & 1 & 64.2 & \textbf{89.6} & \underline{46.7} & \underline{63.3} \tabularnewline
24 & 8 & 56.7 & 85.1 & \textbf{48.9} & 58.9 \tabularnewline
\bottomrule
\end{tabular}
\caption{Effect of planner memory capacity $C$ and retrieval count $R$ with Gemma-4-26B-A4B (\%). The upper block varies $C$ at $R=4$, the lower block varies $R$ at $C=24$. The default setting is shaded.}
\label{tab:ablation-agent-memory}
\end{table}

\subsection{Computational Cost}

Table~\ref{tab:cost} reports per-step latency measured on a task subset with Gemma-4-26B-A4B running on 2$\times$ NVIDIA A40 GPUs.
Our method takes 6.72 seconds per step, compared with 0.51--1.94 seconds for the baselines.
The additional overhead comes from predicting and scoring multiple candidate actions before execution.
Although slower, this remains practical for tasks without strict real-time constraints, where improved task success can justify the added cost.

\begin{table}[!ht]
\centering
\small
\setlength{\tabcolsep}{6pt}
\begin{tabular}{lccc}
\toprule
Method & Avg. sec. / step$\downarrow$ & SR@1$\uparrow$ & SR@5$\uparrow$ \tabularnewline
\midrule
ReAct & 0.51 & 20.1 & 24.6 \tabularnewline
ITP-I & 1.94 & \underline{32.8} & \underline{54.5} \tabularnewline
WorldEvolver & 1.33 & 16.4 & 20.9 \tabularnewline
\rowcolor[HTML]{EAF3FB}\textbf{Ours} & 6.72 & \textbf{65.7} & \textbf{84.3} \tabularnewline
\bottomrule
\end{tabular}
\caption{Per-step latency and success rates using Gemma-4-26B-A4B on 2$\times$ NVIDIA A40 GPUs.}
\label{tab:cost}
\end{table}

\section{Conclusion}
We presented a self-improving, test-time MPPI-inspired framework for language agents performing long-horizon tasks.
The framework predicts the outcomes of several candidate actions with a world model before acting, and uses MPPI-style weights to summarize their costs for the final decision.
A dual-branch experience memory enables self-improvement: planner memory refines candidate generation and selection, and world model memory improves next-observation prediction.
Experiments on ALFWorld and ScienceWorld demonstrate the strongest overall performance across three backbones, with measured computational costs remaining practical for the evaluated tasks.

\section{Limitations}

First, world model accuracy is important because predicted outcomes determine candidate costs and can change the controller's action preference.
Our current dual-branch adaptation relies on retrieved textual experience while keeping model parameters frozen. More capable mechanisms for improving world model accuracy and adapting to changing dynamics remain an important direction for future work.
Second, evaluating multiple candidates requires additional world model and evaluator calls at test time.
This increases inference latency and cost as the candidate set grows, motivating more computationally efficient ways to obtain useful candidate comparisons.

\textbf{Ethical considerations. }Our experiments are conducted entirely in text-based simulated environments and involve no human subjects or personal data. If such an agent were deployed in real environments, inaccurate world model predictions could lead to unsafe actions, so human oversight would be needed.

\section*{Acknowledgments}

This research is supported by the National Artificial Intelligence Research Resource Pilot Awards NAIRR250199, NAIRR260019, and NAIRR260077; AMD University Program’s AI \& HPC Cluster; NVIDIA Academic Grant Program; Lambda's Research Grant; and Google Cloud Research Credits Program. Computational resources are also provided by Delta and DeltaAI at the National Center for Supercomputing Applications~(NCSA) through ACCESS allocations CIS250012, CIS250816, and CIS251188.

\bibliography{references}

\clearpage
\appendix
\section{World Model Memory Implementation Details}\label{app:world-memory}

Our implementation adopts WorldEvolver's episodic and semantic memory design~\citep{zhang2026worldevolver} as auxiliary prediction context.
For each real transition, the episodic bank stores the current observation, executed action, and observation returned by the environment.
For a candidate action $a$, we tokenize it and each stored action $a'$ into token sets and score them by Jaccard similarity:
\[
s(a,a')=\frac{|\mathcal{T}(a)\cap\mathcal{T}(a')|}{|\mathcal{T}(a)\cup\mathcal{T}(a')|}.
\]
We retrieve up to the five highest-scoring transitions.
Each retrieved record is presented as a state--action--next-state example.
The predictor prompt also includes the task goal and current observation.

The semantic memory bank stores reusable rules about environment changes and persistence.
For each prediction--observation mismatch, the language model extracts at most one rule.
Up to eight recent rules guide subsequent predictions.

For both benchmarks, the episodic memory bank stores up to 100 transitions and retrieves up to 5 per candidate, while the semantic memory bank stores up to 128 rules and supplies up to 8 per prediction.
Memory is retained across trajectories within each environment, allowing later interactions to reuse accumulated experience.

\section{Theoretical Analysis and Proofs}\label{app:weighting-proofs}
We prove Propositions~\ref{prop:weighting} and~\ref{prop:cost-error}.

\begin{proof}[Proof of Proposition~\ref{prop:weighting}]
Suppress the time index and define
\[
\begin{aligned}
Z&=\sum_{k=1}^{N}u^{(k)}\exp(-C^{(k)}/\lambda),\\
w^{(k)\star}&=\frac{u^{(k)}\exp(-C^{(k)}/\lambda)}{Z}.
\end{aligned}
\]
Since $u^{(k)}=1/N$, these weights coincide with Eq.~\eqref{eq:candidate-weights}. Using
\[
\log\frac{w^{(k)\star}}{u^{(k)}}
=-\frac{C^{(k)}}{\lambda}-\log Z,
\]
we obtain
\[
\mathcal F(\mathbf w)
=\lambda D_{\mathrm{KL}}(\mathbf w\Vert\mathbf w^\star)
-\lambda\log Z.
\]
The KL divergence is nonnegative and vanishes only at $\mathbf w=\mathbf w^\star$, establishing the unique minimizer. Since $\mathcal F(\mathbf w^\star)=-\lambda\log Z$, subtracting this minimum gives the stated identity. We use the convention $0\log 0=0$.
\end{proof}

\begin{proof}[Proof of Proposition~\ref{prop:cost-error}]
For any distribution $\mathbf w$ on the fixed candidate set, the cost-error assumption implies
\[
|\mathcal F_t(\mathbf w)-\bar{\mathcal F}_t(\mathbf w)|
\leq\sum_k w^{(k)}\epsilon=\epsilon.
\]
Optimality of $\mathbf w_t^\star$ for $\mathcal F_t$ then gives
\[
\begin{aligned}
\bar{\mathcal F}_t(\mathbf w_t^\star)
&\leq\mathcal F_t(\mathbf w_t^\star)+\epsilon\\
&\leq\mathcal F_t(\bar{\mathbf w}_t)+\epsilon\\
&\leq\bar{\mathcal F}_t(\bar{\mathbf w}_t)+2\epsilon.
\end{aligned}
\]
This proves Eq.~\eqref{eq:objective-gap}. Applying Proposition~\ref{prop:weighting} to the reference costs gives
\[
\begin{aligned}
&\bar{\mathcal F}_t(\mathbf w_t^\star)
-\bar{\mathcal F}_t(\bar{\mathbf w}_t)\\
&\qquad=\lambda D_{\mathrm{KL}}(\mathbf w_t^\star\Vert\bar{\mathbf w}_t)
\leq 2\epsilon.
\end{aligned}
\]
Dividing by $\lambda>0$ establishes Eq.~\eqref{eq:kl-gap}.

For Eq.~\eqref{eq:regret-decomp}, nonnegativity of the KL term gives
\[
\sum_k w_t^{(k)\star}\bar C_t^{(k)}
\leq\bar{\mathcal F}_t(\mathbf w_t^\star).
\]
The proof of Proposition~\ref{prop:weighting} also gives
$\bar{\mathcal F}_t(\bar{\mathbf w}_t)=-\lambda\log\bar Z_t$, where
\[
\begin{aligned}
\bar Z_t
&=\frac{1}{N_t}\sum_k\exp(-\bar C_t^{(k)}/\lambda)\\
&\geq\frac{1}{N_t}\exp\!\left(-\frac{\min_k\bar C_t^{(k)}}{\lambda}\right).
\end{aligned}
\]
Therefore,
\[
\bar{\mathcal F}_t(\bar{\mathbf w}_t)
\leq\min_k\bar C_t^{(k)}+\lambda\log N_t.
\]
Combining these inequalities with Eq.~\eqref{eq:objective-gap} yields
\[
\sum_k w_t^{(k)\star}\bar C_t^{(k)}
\leq\min_k\bar C_t^{(k)}+2\epsilon+\lambda\log N_t.
\]
Adding $\delta_t^{\mathrm{sel}}$ to both sides turns the left-hand side into $\mathbb E_{a_t\sim\pi_{\mathrm{final}}}[\bar C_t(a_t)]$, where the expectation is over $a_t\sim\pi_{\mathrm{final}}(\cdot\mid x_t,\mathcal A_t,\mathcal E_t,M_t^{\mathrm P})$ in Eq.~\eqref{eq:final-controller}. Subtracting $\bar C_t(a^\circ)$ then yields Eq.~\eqref{eq:regret-decomp} by the definition of $\Delta_t^{\mathrm{cov}}$.
The selection gap captures the final controller's contribution to per-step regret, completing the decomposition into candidate coverage, predictive cost error, and action selection.
\end{proof}

\onecolumn

\section{Detailed Results}\label{app:detailed-results}

Table~\ref{tab:alfworld-tasks} reports SR@1 / SR@5 for each ALFWorld task type (PICK, CLEAN, HEAT, COOL, LOOK, and PICK2). Although the best method varies across individual task types, overall success rate is the primary measure of performance, and Ours achieves the best overall ALFWorld results across all three backbones.

\begin{table*}[!ht]
\centering
\small
\setlength{\tabcolsep}{3pt}
\begin{tabularx}{0.99\textwidth}{l*{7}{>{\centering\arraybackslash}X}}
\toprule
Method & PICK & CLEAN & HEAT & COOL & LOOK & PICK2 & Overall \tabularnewline
\midrule
\multicolumn{8}{c}{\raisebox{-0.25\height}{\tikz[x=1.1em,y=1.1em]{\path[fill=blue!65!cyan] (0.5,1) .. controls (0.5,0.68) and (0.68,0.5) .. (1,0.5) .. controls (0.68,0.5) and (0.5,0.32) .. (0.5,0) .. controls (0.5,0.32) and (0.32,0.5) .. (0,0.5) .. controls (0.32,0.5) and (0.5,0.68) .. (0.5,1) -- cycle;}}\hspace{0.45em}\textit{Backbone: Gemma-4-26B-A4B}} \tabularnewline
\midrule
ReAct & 95.8 / 100.0 & 0.0 / 0.0 & 0.0 / 0.0 & 0.0 / 0.0 & 0.0 / 0.0 & 23.5 / 52.9 & 20.1 / 24.6 \tabularnewline
ITP-I & 95.8 / 100.0 & 29.0 / 58.1 & 0.0 / 4.3 & 9.5 / 38.1 & 38.9 / 77.8 & 17.6 / 47.1 & \underline{32.8} / \underline{54.5} \tabularnewline
WorldEvolver & 87.5 / 91.7 & 0.0 / 0.0 & 0.0 / 0.0 & 0.0 / 0.0 & 0.0 / 0.0 & 5.9 / 35.3 & 16.4 / 20.9 \tabularnewline
\rowcolor[HTML]{EAF3FB}\textbf{Ours} & 91.7 / 100.0 & 74.2 / 90.3 & 30.4 / 43.5 & 95.2 / 95.2 & 50.0 / 83.3 & 41.2 / 94.1 & \textbf{65.7} / \textbf{84.3} \tabularnewline
\midrule
\multicolumn{8}{c}{\raisebox{-0.25\height}{\includegraphics[height=1.1em]{figures/qwen-logo.png}}\hspace{0.45em}\textit{Backbone: Qwen3.5-9B}} \tabularnewline
\midrule
ReAct & 20.8 / 33.3 & 6.5 / 19.4 & 13.0 / 17.4 & 23.8 / 28.6 & 22.2 / 22.2 & 17.6 / 29.4 & 14.9 / 24.6 \tabularnewline
ITP-I & 4.2 / 20.8 & 6.5 / 9.7 & 0.0 / 0.0 & 4.8 / 4.8 & 27.8 / 44.4 & 0.0 / 0.0 & 6.7 / 12.7 \tabularnewline
WorldEvolver & 58.3 / 70.8 & 32.3 / 41.9 & 8.7 / 17.4 & 0.0 / 0.0 & 33.3 / 72.2 & 5.9 / 17.6 & \underline{24.6} / \underline{37.3} \tabularnewline
\rowcolor[HTML]{EAF3FB}\textbf{Ours} & 54.2 / 91.7 & 22.6 / 35.5 & 0.0 / 17.4 & 23.8 / 57.1 & 33.3 / 50.0 & 41.2 / 88.2 & \textbf{28.4} / \textbf{54.5} \tabularnewline
\midrule
\multicolumn{8}{c}{\raisebox{-0.25\height}{\openailogo}\hspace{0.45em}\textit{Backbone: GPT-5.4-mini}} \tabularnewline
\midrule
ReAct & 95.8 / 100.0 & 0.0 / 0.0 & 69.6 / 82.6 & 4.8 / 28.6 & 0.0 / 0.0 & 70.6 / 94.1 & 38.8 / 48.5 \tabularnewline
ITP-I & 79.2 / 100.0 & 67.7 / 93.5 & 8.7 / 39.1 & 52.4 / 85.7 & 44.4 / 100.0 & 35.3 / 64.7 & \underline{50.0} / \underline{81.3} \tabularnewline
WorldEvolver & 87.5 / 100.0 & 0.0 / 0.0 & 13.0 / 13.0 & 0.0 / 0.0 & 0.0 / 0.0 & 52.9 / 82.4 & 24.6 / 30.6 \tabularnewline
\rowcolor[HTML]{EAF3FB}\textbf{Ours} & 83.3 / 95.8 & 48.4 / 90.3 & 65.2 / 82.6 & 76.2 / 95.2 & 66.7 / 100.0 & 52.9 / 94.1 & \textbf{64.9} / \textbf{92.5} \tabularnewline
\bottomrule
\end{tabularx}
\caption{Success rates (\%) on ALFWorld by task type across three backbones. Each cell reports SR@1 / SR@5. Bold and underlined overall success rates indicate the best and second-best results, respectively.}
\label{tab:alfworld-tasks}
\end{table*}

\renewcommand{\paragraph}[1]{\par\noindent\textbf{#1}\par\smallskip}\section{Prompt Templates}\label{app:prompts}

We use WorldEvolver's ReactAgent prompt~\citep{zhang2026worldevolver} as a starting point and substantially revise it for our framework. The resulting module-specific prompts are presented below.

\subsection{Candidate Generation}\label{app:prompt-candidate}
Both calls receive the shared context below. Call 1 appends a reasoning instruction; Call 2 appends the returned reasoning and candidate-generation instructions.

\paragraph{Shared context.}
\begin{lstlisting}[style=prompt]
Generate 1 to 8 distinct one-step action candidates.

Environment instruction:
{environment_instruction}

There is an interaction example (style and action/observation pattern only; do not copy its final Thought/Action output contract):
{fixed_demonstration}

This stage proposes candidates for a
downstream planner. It does not execute an action and must use the separate
candidate format at the end of this prompt.

Goal:
{goal}

Trajectory:
{trajectory}

Current observation:
{current_observation}


Learned planner experience (fallible; success and failure experiences):
- {retrieved_candidate_experience_1}
- {retrieved_candidate_experience_2}
- ...
Follow the public protocol exactly. Generate distinct one-step actions
that may reveal useful information or advance the task.

Prefer a grounded action that directly advances the goal.
Use an information action only when it can resolve relevant uncertainty.
Preserve the operation requested by the goal; do not replace it with an unrelated
information action. If multiple visible referents are plausible, include distinct
grounded alternatives.

Copy every non-empty argument verbatim from the current observation or an environment Observation in the trajectory. Never use names found only in the Goal, demonstrations, or earlier model-generated actions.
Do not repeat the latest unchanged failed action or add redundant candidates.
Candidates are suggestions, not guarantees.

If no grounded argumented action is available, output one useful zero-argument protocol action.



Keep candidates distinct and do not repeat the latest unchanged failure.
Return only:
CANDIDATE_1=<action>
CANDIDATE_2=<action>
...


\end{lstlisting}

\paragraph{Call 1: Reasoning.}
\noindent\textit{Input: Shared context + the following reasoning instruction.}
\begin{lstlisting}[style=prompt]
Analyze the visible evidence, protocol, goal, and learned planner experience briefly.
This is a private advisory analysis for a separate emitter: do not reproduce
instructions or demonstrations, and do not emit candidate markers or actions.
\end{lstlisting}

\paragraph{Call 2: Candidate Action Generation.}
\noindent\textit{Input: Shared context + the following.}
\begin{lstlisting}[style=prompt]
Advisory analysis from the candidate reasoner:
{candidate_reasoning}

Before emitting, correct any analysis that replaced an executable goal
operation with information gathering. If the goal operation's target is
uncertain, use the visible plausible referents as separate candidates; do not
emit only information-gathering actions when the requested operation itself
can test the visible referents.
Now emit the candidate proposals. Return up to the requested maximum number of lines in exactly
this form, with no Thought, Action, explanation, markdown, or text after the
last line:
CANDIDATE_1=<one atomic action>
CANDIDATE_2=<one atomic action>
...
\end{lstlisting}

\subsection{World Model Prediction}\label{app:prompt-world-model}

\begin{lstlisting}[style=prompt]
## Frame Axioms and Persistence Rules
- {semantic_rule_1}
- {semantic_rule_2}
{additional_semantic_rules}

## Retrieved similar past transitions

Current state: {retrieved_state_1}
Action: {retrieved_action_1}
Next state: {retrieved_next_state_1}

{additional_retrieved_transitions}

You are a world model for the {environment_name} environment. Given the agent's current observation, its proposed action, and the task goal, predict what the agent will observe next.

If a "## Frame Axioms and Persistence Rules" section appears above, treat each rule as a constraint: do not predict any change to a property the rules say does not change for the action being taken. Use the rules to filter out spurious changes.

If a "## Retrieved similar past transitions" section is provided, use those transitions as analogies for what can change after this action. Let the retrieved transitions guide what changes, but keep the prediction consistent with the current observation and task goal.

Output format: a single paragraph starting with "Prediction:" describing the next observation in the style of the environment's own text output.

Task: {goal}

State:
{world_model_observation}

Action: {candidate_action}

Prediction:
\end{lstlisting}

\subsection{Cost Evaluation}\label{app:prompt-evaluator}

\begin{lstlisting}[style=prompt]
Score all candidate actions jointly as decision evidence for an embodied-agent planner.

Goal:
{goal}

Trajectory:
{trajectory}

Current observation:
{current_observation}

Candidates and their uncertain WM predictions:
Candidate 1: {candidate_action_1}
WM predicted next observation: {predicted_next_observation_1}
Candidate 2: {candidate_action_2}
WM predicted next observation: {predicted_next_observation_2}
{remaining_candidate_prediction_blocks}

Relevant planner memory (fallible):
{retrieved_final_planner_memory_json}

Assign each candidate a cost from 0 to 100. A lower cost means more likely to make useful progress or reveal useful information for the goal; 100 means very poor. Consider the current observation and trajectory, expected progress, information value, and whether the predicted result is consistent with the visible evidence. WM predictions are uncertain suggestions, not facts.
Do not reject, filter, rewrite, or replace any action. Score every candidate, including weak ones, in one response.

Return only one line per candidate, in the original order:
COST_1=<number from 0 to 100>
COST_2=<number from 0 to 100>
...
\end{lstlisting}

\subsection{Final Action Selection}\label{app:prompt-final}

\begin{lstlisting}[style=prompt]
Choose the next action for an embodied agent.

Environment instruction:
{environment_instruction}

There is an interaction example (demonstrates reasoning/action/observation style; the final output contract below takes precedence):
{fixed_demonstration}

The environment instruction and example are references for this decision. Use the candidates and forecasts supplied below, then follow this planner's final output contract.

Goal:
{goal}

Trajectory:
{trajectory}

Current observation:
{current_observation}

Candidate proposals and uncertain WM forecasts:
Candidate 1: {candidate_action_1}
WM forecast: {predicted_next_observation_1}
MPPI-inspired evidence: weight={weight_1:.6f}
Candidate 2: {candidate_action_2}
WM forecast: {predicted_next_observation_2}
MPPI-inspired evidence: weight={weight_2:.6f}
{remaining_candidate_forecast_cost_weight_blocks}

Planner memory (model-generated and fallible; unresolved is a hypothesis, not hidden environment state):
{retrieved_final_planner_memory_json}

The current observation is authoritative; WM forecasts are uncertain evidence.
Prefer a grounded candidate that directly advances the goal. Use an information action only when it can resolve relevant uncertainty. If every candidate is insufficient, synthesize one new atomic action in the public protocol using visible names.
If no candidate is suitable and external actions are allowed, generate one grounded protocol action. Preserve the operation requested by the goal.

Before acting, ground every non-empty action argument by copying its complete name from Current observation or an environment Observation in the trajectory. Never use a name found only in the Goal, a demonstration, a WM forecast, or an earlier model-generated action. If no candidate is sufficient, any external action must follow the same grounding rule; otherwise choose a zero-argument information action.

Output exactly two lines:
Thought: <brief decision>
Action: <one action>
Reference examples do not override this request. Return exactly one line in the form Thought: <one brief sentence>. Action: <one action>. Do not repeat the trajectory or the examples. Do not emit an Observation or a second Thought/Action pair.
\end{lstlisting}

\subsection{Planner Reflection}\label{app:prompt-reflection}

\begin{lstlisting}[style=prompt]
Evaluate only the latest visible transition.
Goal: {goal}
Previous state:
{previous_observation}
Executed action:
{executed_action}
New state/observation:
{observed_next_state}

You may retain an environment-specific regularity when the feedback directly supports it, but do not import a rule that is not shown by the transition or treat an unsupported guess as fact.
Return SUCCESS only for substantive, visible progress toward the goal. An accepted action, a changed string, a successful inspection, or a newly exposed fact is not sufficient by itself. The new observation must either establish a previously unavailable goal-relevant fact/relation, make a visible state change that advances the goal, or visibly confirm a goal-relevant sub-result.
Return FAILURE only when the visible result shows rejection, impossibility, contradiction, or a failed attempt that yields a reusable mistake.
Return IGNORE when the action was accepted but was routine, repeated, neutral, or provided no reusable experience. If there is no substantive progress and no reusable experience from failure, use IGNORE rather than inventing experience.

When the outcome is FAILURE, extract two short experiences from this example. For the candidate-generation experience, consider whether the proposal was grounded in visible names and relations, whether its arguments had enough context, whether it repeated an unchanged hypothesis, and whether another observation or interaction would reveal more information. For the final action-selection experience, consider whether the action matched the latest evidence, whether its predicted consequence was only a weak hypothesis, whether the candidate frontier was insufficient, and what visible change should be preferred next.

When the outcome is SUCCESS, extract two short reusable experiences instead: one about what made the proposal produce the substantive progress, and one about why the selected action was a good choice given that progress. Generalize from the visible transition without assuming that the same object, action, or hidden state will recur. Preserve a convention only when this transition directly supports it, label it as context-dependent, and do not invent a hidden cause. Do not copy the observation or write generic advice such as only "verify."

Also write one short unresolved question or constraint for the current trajectory when the goal is not yet complete. It is a fallible reasoning aid, not a hidden-state assertion. If there is no unresolved issue, use an empty string. On SUCCESS, fill the two success-experience fields and leave the failure fields empty. On FAILURE, do the reverse. On IGNORE, leave all four experience fields empty. Always provide the unresolved field.

Be concise: each experience and unresolved value must be at most 20 words. Do not add fields or markdown fences.

Return JSON only:
{
  "outcome": "success|failure|ignore",
  "candidate_generation_experience": "candidate proposal experience, or empty string",
  "final_planner_experience": "final action-selection experience, or empty string",
  "unresolved": "one short current-trajectory question or constraint, or empty string"
}\end{lstlisting}

\end{document}